\documentclass[sn-basic]{sn-jnl}
\usepackage{textcomp}

\usepackage{graphicx}%
\usepackage{multirow}%
\usepackage{amsmath,amssymb,amsfonts}%
\usepackage{amsthm}%
\usepackage{mathrsfs}%
\usepackage[title]{appendix}%
\usepackage{xcolor}%
\usepackage{textcomp}%
\usepackage{manyfoot}%
\usepackage{booktabs}%
\usepackage{algorithm}%
\usepackage{algorithmicx}%
\usepackage{algpseudocode}%
\usepackage{listings}%

\newcommand{\ra}{\rightarrow}
\newcommand{\lra}{\leftrightarrow}
\newcommand{\Ra}{\Rightarrow}
\newcommand{\sbe}{\subseteq}
\newcommand{\sms}{\setminus}

\newcommand{\al}{\alpha}
\newcommand{\be}{\beta}
\newcommand{\ph}{\varphi}

\theoremstyle{thmstyleone}%

\theoremstyle{thmstyletwo}%

\theoremstyle{thmstylethree}%
\newtheorem{definition}{Definition}%
\newtheorem{proposition}{Proposition}%
\newtheorem{theorem}{Theorem}
\begin{document}

\title[Article Title]{Solution of the Hempel's statistical ambiguity problem and Causal AI}


\author*[1]{\fnm{Evgenii} \sur{Vityaev}}\email{vityaev@math.nsc.ru}

\affil*[1]{\orgname{Sobolev Institute of Mathematics of the SB RAS}, \orgaddress{\street{Koptuga 4}, \city{Novosibirsk}, \postcode{630090}, \country{Russia}}}


\abstract{This paper addresses Carl Hempel's longstanding problem of statistical ambiguity in inductive-statistical inference, in which contradictory predictions are derived from statistical laws. To avoid such predictions, Carl Hempel proposed the Requirement of Maximal Specificity (RMS) for the statistical laws used in the inference. An analysis of the RMS refinements made by Wesley Salmon, Alberto Coffa, and James Fetzer led to the following definition of maximally specific statistical laws: "the lawlike premises of an adequate explanation must specify all and only those properties whose presence or absence made a difference to the occurrence of its explanandum-phenomenon." However, there was no proof of a solution to the statistical ambiguity problem based on this definition. We use Nancy Cartwright's definition of causes that raise probabilities across background contexts, and then introduce the concept of Causal Rules. Then we define a special semantic probabilistic inference procedure that incrementally refines these causal rules by incorporating all statistically relevant information. This procedure yields Maximally Specific Causal Relationships (MSCRs), for which we prove (Theorem 1) that predictions derived from them are consistent. This resolves the statistical ambiguity problem. The semantic probabilistic inference procedure provides a probabilistic causal learning system, which may be used in such new areas as Causal AI and Causal Machine Learning. They fundamentally explore causal inference as a tool for understanding cause-and-effect relationships within complex systems. Properties similar to RMS remain under discussion. Several notions related to RMS are considered: invariant feature learning, invariant causal prediction, and spurious association.}

\keywords{explanation, inductive-statistical inference, statistical ambiguity, causal inference, consistency}

\maketitle

\section{Introduction}

In recent years, within the fields of Causal AI and Causal Machine Learning, causal inference has become a critical tool for understanding cause-and-effect relationships. Integrating these relationships into machine learning models enables the creation of causal models for a deeper understanding of real-world systems. This paradigm provides more robust, interpretable, and actionable insights in areas such as medicine, finance, and autonomous systems.

To develop the probabilistic causal learning system that infers effects without contradictions we need to solve Hempel's \textit{statistical ambiguity problem}. It concerns the problem that contradictory predictions derived from inductive-statistical inference \cite{Hempel65, Hempel68}. To avoid such contradictions, Carl Hempel introduced the Requirement of Maximal Specificity (RMS) for statistical laws, which means that maximally specific statistical laws must incorporate all information relevant to prediction.
Subsequent analysis of RMS revealed numerous problems and disputes (see next section on Historical background). As a result, a solution to the "statistical ambiguity" problem was not reached, and the consistency of predictions for maximally specific statistical laws was not proven, and the problem remained unsolved. 

This article resumes the consideration of the "statistical ambiguity problem" and RMS. We define RMS based on Hempel's definition and define the notion of maximally specific cause-effect relationships (MSCR). Then we prove a theorem (Theorem 1) that predictions based on MSCRs are consistent. This solves Hempel's "statistical ambiguity problem" for maximally specific cause-effect relationships.
For the works in Causal AI and Causal Machine Learning it is rather important. 

Based on Cartwright's definition of causes that raise  probabilities of their effects in various background contexts \cite{Cartwright, Stanford} we define a stronger notion of Causal Rules (CR).  

\begin{itemize}
\item[] Cartwright: C causes E if and only if \\ $P(E\mid C\&B)>P(E\mid\neg{C}\&B)$ \\ for every background context B.
\medskip
\item[] The background in the Causal Rules consists of all other conditions of the rule.
\medskip
\item[] Causal Rule (see Definition 6): The rule $A_1\&\dots\&A_k \Rightarrow A_0$, \\ is a \textit{causal rule} iff every literal $A_1,\dots,A_k$ is a cause of $A_0$ \\
$P(A_0\mid A_1\&\dots\&A_k) > P(A_0\mid \neg{A_i}\&B)$, $i=1,\dots,k$, \\ relative to background $B=\&\{\{A_1,\dots,A_k\}\setminus A_i\}$, containing all literals except $A_i$. 
\end{itemize}

We introduce a  refinement procedure  for causal rules that incrementally strengthens the conditional probabilities of causal rules by adding all relevant information until the causal rules can no longer be refined, thus producing Maximally Specific Causal Relationships. It provides the learning method (Definition 8) for MSCR discovery. 
MSCRs may be considered as causal models, as they identify the key variables that underpin reliable cause-and-effect relationships. This method thus produces a causal learning system.

Let $\sf MSR$ is the set of all MSCRs. It is proved in Theorem 1 that I-S inference of predictions based on any subset of $\sf MSR$ is consistent. The set $\sf MSR$ thus provides a consistent set of probabilistic knowledge. The set $\sf MSR$ may be considered as a \textit{consistent probabilistic theory} in the same sense as a logical theory that is consistent and infers consistent conclusions.

The problem of statistical ambiguity for statistical laws of the form $A_1\&\dots\&A_k \Rightarrow A_0$, where $A_1,\dots,A_k,A_0$ are literals was considered in \cite{Author1, Author12}, and for the form $\phi \Rightarrow \psi$, where $\phi$ and $\psi$ are propositional formulas was considered in \cite{Author13}.

Our paper is organized as follows. Section 2 (RMS history) is devoted to the historical analysis of RMS and reveals discussions around it. Section 3 considers relation of RMS to Causal AI and Causal Machine Learning. Section 4 is devoted to definition of probability on propositional formulas. Section~5 considers rules, their probabilities and the definition of MSCR. In Section~6, we define I-S inference as a prediction operator for $\sf MSR$ and prove that applying it to a consistent set of MSCRs produces a consistent set. Finally, Section~7 is conclusion. 

\section{RMS history}

The problem of statistical ambiguity in inductive-statistical (I-S) explanation was first identified by Hempel. It generated a substantial literature attempting to articulate and refine the conditions under which probabilistic explanations can avoid contradictory conclusions. This section traces the evolution of the Requirement of Maximal Specificity (RMS) from its origins in Hempel's work through its subsequent critiques and reformulations, culminating in the identification of causal factors as the essential element for resolving the ambiguity.

\textbf{The Principle of Total Evidence and Its Limitations}.
Before addressing the Requirement of Maximal Specificity directly, it is essential to distinguish it from another fundamental principle of probabilistic inference: the requirement of total evidence. As articulated by Carnap and others, the requirement of total evidence states that in any rational application of probabilistic inference, the probability assigned to a hypothesis must be determined by reference to all available evidence. In Hempel's formulation, this principle directs that when assessing the credibility of a statement, one must consider the entire body of knowledge K: ``What degree of belief, or what probability, is it rational to assign to the statement 'Gi' in a given knowledge situation?" \cite{Hempel68}.

Initially, Hempel had conflated the requirement of maximal specificity with the requirement of total evidence, suggesting that RMS served as a "rough substitute" for the latter. However, in his 1968 reappraisal, he explicitly retracted this view, recognizing that the two principles address fundamentally different questions. The requirement of total evidence concerns the rational credibility of a statement based on all available information. The Requirement of Maximal Specificity, by contrast, addresses the explanatory status of an argument: it specifies conditions under which two sentences in $K$ — a statistical law $p(G,F)=r$ and a singular sentence $F_i$ — can serve to explain, relative to $K$, why $i$ is $G$. As Hempel emphasizes: The point of an explanation is not to provide evidence for the occurrence of the explanandum phenomenon, but to exhibit it as nomically expectable. And the probability attached to an $I-S$ explanation is the probability of the conclusion relative to the explanatory premises, not relative to the total class $K$. Thus, the requirement of total evidence simply does not apply to the determination of the probability associated with an $I-S$ explanation, and the requirement of maximal specificity is not "a rough substitute for the requirement of total evidence." \cite{Hempel68}

This clarification established RMS as an independent principle with its own distinct rationale, specifically designed to address the problem of explanatory ambiguity in statistical contexts.

\textbf{The Problem of Statistical Ambiguity and the Initial Formulation of RMS}.
The problem that RMS was designed to solve emerges from the nature of inductive-statistical explanation itself. As \cite{Hempel65} demonstrated, an $I-S$ argument has the form:

\medskip
\begin{tabular}{l} \label{*}
$p(G;F) = r$ \\
$F(a)$ \\
\hline
$G(a)$
\end{tabular}
\medskip\noindent

\noindent where line indicates an inductive relationship, and r represents the inductive probability of the explanandum given the explanans. 

Statistical ambiguity arises when two arguments with true premises yield contradictory conclusions. It can be illustrates with a classic example (cited Salmon). 

Suppose that we have the following statements.
\begin{itemize}
\item L1 Almost all cases of streptococcus infection clear up quickly after the administration of penicillin.
\item L2 Almost no cases of penicillin resistant streptococcus infection clear up quickly after the administration of penicillin.
\item C1 Jane Jones had streptococcus infection.
\item C2 Jane Jones received treatment with penicillin.
\item C3 Jane Jones had a penicillin resistant streptococcus infection.
\end{itemize}

Following the above pattern of I-S explanation it is possible to construct two contradictory arguments based on these statements. On the base of $L_1$ and $C1 \land C2$ one can explain why Jane Jones recovered quickly (E). The second argument with premises $L_2$ and $C2 \land C3$ explains why Jane Jones did not ($\lnot E$). The premises of both arguments are consistent with each other, they could all be true. The probability of both argument may be close to 1.

To block such conflicting explanations, Hempel introduced the Requirement of Maximal Specificity \cite{Hempel65}.

\begin{quote}
\textbf{RMS}: For an $I-S$ argument to be acceptable relative to a knowledge state $K$, for any predicate $H$ such that $K$ contains both $\forall x(H(x)\Rightarrow F(x))$ and $H(a)$, there must exist a statistical law $p(G;H)=r$ in $K$
with the same probability $r$. 
\end{quote}

The basic idea is that if $H$ provides more specific information about the object than $F$, then the law based on $H$ should be preferred — and if that law has a different probability, the original argument is invalid.

Hempel hoped to solve this problem by forcing all statistical laws in an argument to be maximally specific. That is, they should contain all relevant information with respect to the domain in question. In our example, then, the  premise $C3$ invalidates the first argument, since it is not maximally specific with respect to all information about Jane Jones. So, we can only explain $\neg$E, but not E.

\textbf{The Problem of Epistemic Relativity}. Coffa criticizes Hempel's insistence on relativizing $I-S$ explanation to a knowledge state K \cite{Coffa}.
Coffa begins by distinguishing between epistemic and non-epistemic concepts. A concept is epistemic if its meaning cannot be given without reference to knowledge; non-epistemic concepts—such as ``table," ``chair," or ``truth" — can be characterized independently of any knower. Hempel's thesis, Coffa explains, is that inductive explanation is not merely epistemic in the sense that it depends on what we know about the evidence (as in the case of ``well-confirmed D-N explanation"), but rather that it is a ``non-conformational epistemic concept". This means, as Hempel himself acknowledges, that ``there is no concept that stands to his epistemically relativized notion of inductive explanation as the concept of true D-N explanations stands to that of well-confirmed D-N explanation" \cite{Coffa}. In other words:
According to the thesis of epistemic relativity there is no meaningful notion of true inductive explanation. Hence, we could not possibly have reasons to believe that anything is a ``true inductive explanation". \cite{Coffa}.

Coffa's fundamental objection is that this conclusion is not merely surprising but fatal to the project of inductive explanation. If there is no notion of true inductive explanation, then $I-S$ explanations relativized to $K$ cannot be understood as arguments that we have reason to believe are explanations in the same sense that $D-N$ explanations are. 

More importantly, Coffa argues that Hempel has misidentified the nature of the problem. The real difficulty is not epistemic but ontological — it is the old problem of the reference class in a new guise.

We would like to suggest that when Hempel turned his attention to the theory of inductive explanation what he stumbled upon was the fact that the problem of defining a model of inductive explanation for single events was the other side of the coin of the single case problem. He stumbled, that is, upon the reference-class problem \cite{Coffa}.

The reference-class problem arises because a single event can belong to multiple reference classes with different probabilities for the outcome of interest. Jones belongs to the class of persons with streptococcus infection, the class of persons treated with penicillin, and the class of persons with penicillin-resistant infections. Each yields a different probability for recovery. The frequentist tradition, Coffa notes, held that it is "strictly meaningless to assign a probability to a single event" \cite{Coffa} because all reference classes are, in principle, equally legitimate.

Hempel's epistemic relativization attempts to solve this problem by appeal to knowledge: the appropriate reference class is the most specific class to which the individual is known to belong. But Coffa observes that this strategy has the ironic consequence that ``it is ignorance, rather than knowledge, that makes the maximal specificity principle look like a workable demand" \cite{Coffa}. As knowledge increases, the principle becomes increasingly difficult to satisfy; an omniscient being would find no inductive explanations at all.

Coffa's proposed alternative points toward an ontic rather than epistemic formulation: the requirement should refer to ``all relevant aspects of the explanandum" \cite{Coffa}, where relevance is understood not as statistical correlation but as nomic connection — ``a predicate being nomically relevant to another when a law of nature determines that changes in the first one generate changes in the second one" \cite{Coffa}. This suggestion anticipates later developments in causal approaches to explanation.

\textbf{Salmon's Statistical Relevance Approach and shift to Ontic Homogeneity}. 
A different line of critique emerged from Wesley Salmon, who questioned not merely the formulation of RMS but its fundamental orientation. Salmon argued that Hempel's requirement of high probability is misguided; what matters for explanation is not the magnitude of probability but the presence of statistical relevance relations.

The following example illustrates this point \cite{Salmon}:

\begin{description} 
\item {\em John Jones was almost certain to recover from his cold within a week, because he took vitamin C, and almost all colds clear up within a week after administration of vitamin C.}
\end{description}

The difficulty with this example is that colds tend to clear up within a week regardless of the medication administered, and controlled tests indicate that the percentage of recoveries is unaffected by the use of vitamin C.

Thus, even though the argument satisfies Hempel’s requirements — a high probability, a statistical law, and true premises — it fails to be explanatory because the putative explanatory factor (taking vitamin C) is irrelevant to the outcome. What is needed, Salmon argues, is not high probability but relevance — the fact that taking vitamin C makes a difference to the probability of recovery.

This led Salmon to develop the Statistical-Relevance (S-R) model, in which an explanation consists of partitioning a reference class into cells that are homogeneous with respect to the explanandum property, together with information about which cell contains the individual in question. Crucially, Salmon’s homogeneity requirement is objective rather than epistemic: a class is homogeneous if no further partition can be made that is relevant to the occurrence of the explanandum property, regardless of whether such partitions are known. This represents a fundamental shift from Hempel’s epistemic relativization to an ontic conception of explanation. As Salmon later put it, ``the identification of the appropriate explanans is fully objective” once the explanandum has been unambiguously specified (Salmon, 1989).

\textbf{Fetzer's Requirement of Strict Maximal Specificity and the Turn to Causation}. James Fetzer (\cite{Fetzer81, Fetzer93}) carries the ontic turn further, arguing that neither Hempel's epistemic RMS nor Salmon's statistical relevance conditions are sufficient. What is required is not merely statistical homogeneity but causal relevance. Fetzer introduces the Requirement of Strict Maximal Specificity (RSMS), which demands ``that the lawlike premises of an adequate explanation must specify all and only those properties whose presence or absence made a difference to the occurrence of its explanandum-phenomenon" \cite{Fetzer93}.

Fetzer's analysis reveals that the problem of statistical ambiguity cannot be resolved at the level of statistical relations alone. The fundamental question is not which reference class yields the highest probability or even which partition yields homogeneous cells, but rather: which factors are genuinely responsible for the outcome? This is an ontological question about causal structure, not an epistemological question about the choice of reference class.

\textbf{Conclusion}.
The evolution of the requirements for maximum specificity reveals the depth of the Hempel statistical ambiguity problem.  Salmon and Fetzer came close enough to solving the problem, but did not solve it. Their research is taken into account in our definition of "causal rule". Salmon's statistical significance is taken into account in the definition of "causal rules" as Cartwright's definition of  cause. James Fetzer's requirement of strict maximal specificity is taken into account by determining the statistical significance of each condition in the premise of the rule. Fetzer's requirement that "adequate explanations apply to all and only those properties whose presence influenced the explanation" is taken into account in the procedure for clarifying causal rules, which gradually strengthens the conditional probabilities of causal rules by adding all relevant information until the causal rules can no longer be clarified. This procedure allows us to obtain the Maximally Specific Causal Relationships that solve the problem of statistical ambiguity (Theorem 1).

\section{Causal AI and Causal Machine Learning}

In recent years, causal inference has emerged as a critical tool for understanding cause-and-effect relationships within complex systems. By incorporating causal reasoning into machine learning, models can move to deeper understanding of the real-world systems. 

The main theoretical achievement of MSCRs is the formal proof of consistency for predictions.
This approach shares the ambition of contemporary causal approaches to identify genuinely operative causes.

Properties similar to RMS remain under discussion \cite{Kaddour}. There are some notions related to RMS in the Causal Machine Learning \cite{Kaddour}:  
\begin{itemize}
\item \textit{Invariant Feature Learning}. The task of identifying features of our data X that are predictive of Y across a range of environments E. This definition is very similar to the N. Cartwright definition of causes that raise probabilities of their effects in various background contexts. It is included in our definition of causal rules. 
\item \textit{Invariant Causal Prediction} -- an algorithm to find the causal feature set, the minimal set of features which are causal predictors of a target variable \cite{Kaddour}. 
In our definition of causal rule this property is also fulfilled. 
\item \textit{Spurious association}. In the training dataset, pictures of cows typically exhibit alpine pasture backgrounds, a spurious association caused by the cow's natural habitat. Definitions of cause by N. Cartwright and a causal rule exclude these spurious associations. 
\end{itemize}

Thus, the introduced notion of ``causal rule" remains an orientation for other Causal AI and Causal Machine Learning methods to extract precise knowledge from data predicting without contradictions.

\section{Logic and Probability background}

Let $F(At)$ be a set of well-formed formulas constructed from a set of atoms $At$ using connectives $\&$, $\vee$, $\ra$, $\neg$. Equivalence $\lra$ is an abbreviation, $\ph\lra\psi=(\ph\ra\psi)\&(\psi\ra\ph)$. We define $\top$ as $\ph\vee\neg\ph$, where $\ph$ is some fixed formula. 

The conjunction of a finite set of formulas $T$ is  denoted by $\bigwedge T$. $V(\ph)$ denotes the set of atoms occurring in formula $\ph$. The algebra of formulas $\mathcal{F}(At)$ is a set $F(At)$ with naturally interpreted connectives. 

\begin{definition}
The models are mappings from $At$ to truth values $\{ 0,1\}$. Such mappings we call ($At$-){\em valuations}. Every valuation $val:At\ra \{ 0,1\}$ extends in a standard way to the set $F(At)$ as $val: F(At)\ra \{ 0,1\}$.
\end{definition}

Let $\mathfrak{G}$ be a set of valations.  A formula $\ph$ is said to {\em satisfiable in} $\mathfrak{G}$  if $val(\ph)=1$ for some $val\in\mathfrak{G}$. We say that $\ph$  {\em holds on} $\mathfrak{G}$ (and write $\mathfrak{G}\models\ph$) if $val(\ph)=1$ for all $val\in\mathfrak{G}$. Finally, a set $T\sbe F(At)$ is  said to be $\mathfrak{G}$-{\em consistent} if there is $val\in\mathfrak{G}$ such that $val(\ph)=1$ for all $\ph\in T$. The set of all valuations we denote $\mathfrak{A}$. By a logical tautology we mean a formula that holds on $\mathfrak{A}$.

For a set $\mathfrak{G}$ of valuations, the relation $\ph\equiv_{\mathfrak{G}}\psi$ is defined by $\ph\lra\psi$ holding on $\mathfrak{G}$. Then $\equiv_{\mathfrak{G}}$ is a congruence on $\mathcal{F}(At)$, the respective quotient is denoted as $\mathcal{B}^{\mathfrak{G}}(At)$. The coset of $\ph$ w.r.t. $\equiv_{\mathfrak{G}}$ is denoted as $[\ph]_{\mathfrak{G}}$ and the universe of $\mathcal{B}^{\mathfrak{G}}(At)$ equals $\{ [\ph]_{\mathfrak{G}}\mid \ph\in F(At)\}$. The operations of $\mathcal{B}^{\mathfrak{G}}(At)$ are denoted as $\&_{\mathfrak{G}}$, $\vee_{\mathfrak{G}}$, $\ra_{\mathfrak{G}}$, $\neg_{\mathfrak{G}}$ and the lattice order as $\sqsubseteq_{\mathfrak{G}}$. Recall that $[\ph]_{\mathfrak{G}}\sqsubseteq_{\mathfrak{G}} [\psi]_{\mathfrak{G}}$ iff $[\ph]_{\mathfrak{G}}= [\ph\&\psi]_{\mathfrak{G}}= [\ph]_{\mathfrak{G}}\&_{\mathfrak{G}}[\psi]_{\mathfrak{G}}$.

\begin{definition}
Finitely additive measure $\mu$ is defined on $\mathfrak{G}$ as $\mu:2^{\mathfrak{G}}\ra[0,1]$:
\begin{enumerate}
\item $\mu(\mathfrak{G})=1$, $\mu(\varnothing)=0$; 
\item $\mu(A_1\cup\ldots\cup A_n)=\mu(A_1)+ \ldots +\mu(A_n)$ for pairwise disjoint subsets $A_1$, \ldots , $A_n\sbe\mathfrak{G}$. 
\item $\mu(A)=0$ iff $A=\varnothing$
\end{enumerate}
\end{definition}

Elements of $\mathfrak{G}$ may be interpreted as outcomes of experiments. Further, we assume that only essential experiments are included in $\mathfrak{G}$, which explains why $\mu(\{ val\})\neq 0$ for all $val\in\mathfrak{G}$.

For every $\ph\in F(At)$, we  define $\ph^{\mathfrak{G}}:=\{ val\in\mathfrak{G}\mid val(\ph)=1\}$ and function $\nu : F(At) \ra \{ 0,1\}$ by the rule $\nu(\ph)=\mu(\ph^{\mathfrak{G}})$. Then the function $\nu$ satisfies the following properties.

\begin{proposition}
\begin{enumerate} 
\item $\nu(\ph)=1$ iff $\ph$ holds on $\mathfrak{G}$.
\item $\nu(\ph)=0$  iff $\ph$ is not satisfiable on $\mathfrak{G}$.
\item $\nu(\ph\vee\psi)=\nu(\ph)+\nu(\psi)$ iff $\ph\&\psi$ is not satisfiable in $\mathfrak{G}$.
\end{enumerate}
\end{proposition}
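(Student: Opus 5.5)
The plan is to reduce all three claims to statements about the sets $\ph^{\mathfrak{G}}$ and then use the three axioms of the measure $\mu$. The key property is item~3 of the definition, $\mu(A)=0$ iff $A=\varnothing$: it reflects the assumption $\mu(\{val\})\neq 0$ for every $val\in\mathfrak{G}$ and makes $\mu$ faithful. First I record two elementary facts that follow from the standard extension of valuations to $F(At)$:
\[
(\ph\vee\psi)^{\mathfrak{G}}=\ph^{\mathfrak{G}}\cup\psi^{\mathfrak{G}},\qquad (\ph\&\psi)^{\mathfrak{G}}=\ph^{\mathfrak{G}}\cap\psi^{\mathfrak{G}},\qquad (\neg\ph)^{\mathfrak{G}}=\mathfrak{G}\sms\ph^{\mathfrak{G}}.
\]
Two consequences of finite additivity are also needed. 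From $\mu(\mathfrak{G})=1$ we get $\mu(\mathfrak{G}\sms A)=1-\mu(A)$. Splitting $A\cup B$ as $A\cup(B\sms A)$ and $B$ as $(A\cap B)\cup(B\sms A)$ gives inclusion--exclusion, $\mu(A\cup B)=\mu(A)+\mu(B)-\mu(A\cap B)$.

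For item~2, $\nu(\ph)=0$ means $\mu(\ph^{\mathfrak{G}})=0$. By axiom~3 this holds iff $\ph^{\mathfrak{G}}=\varnothing$, that is, iff no $val\in\mathfrak{G}$ has $val(\ph)=1$, which is exactly non-satisfiability in $\mathfrak{G}$.

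For item~1, $\nu(\ph)=1$ iff $\mu(\mathfrak{G}\sms\ph^{\mathfrak{G}})=0$ by the complement rule. By axiom~3 this holds iff $\ph^{\mathfrak{G}}=\mathfrak{G}$, i.e.\ iff $\mathfrak{G}\models\ph$. Equivalently, I can apply item~2 to $\neg\ph$ and use $(\neg\ph)^{\mathfrak{G}}=\mathfrak{G}\sms\ph^{\mathfrak{G}}$.

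For item~3, inclusion--exclusion gives $\nu(\ph\vee\psi)=\nu(\ph)+\nu(\psi)-\nu(\ph\&\psi)$. So additivity holds iff $\nu(\ph\&\psi)=0$, which by item~2 holds iff $\ph\&\psi$ is not satisfiable in $\mathfrak{G}$.

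No step is a real obstacle. The only point needing care is the converse directions ("only if" in items~1 and~2, and hence in item~3). These depend on axiom~3 (strict positivity of $\mu$ on nonempty sets); without it, only the "if" directions would hold. I would also note in passing that $\nu$ takes values in $[0,1]$ rather than $\{0,1\}$; the codomain written in the paper appears to be a typo, and the argument does not depend on it.
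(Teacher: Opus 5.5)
Your argument is correct. The paper gives no proof of this proposition and treats it as immediate from the definition of $\mu$; your verification (passing to $\ph^{\mathfrak{G}}$, then using the complement rule, inclusion--exclusion, and the faithfulness axiom $\mu(A)=0$ iff $A=\varnothing$) is exactly the routine check it leaves implicit, and you are also right that only the converse directions need that axiom and that the codomain of $\nu$ should be $[0,1]$ rather than $\{0,1\}$.
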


Thus we defined the probability on the set of propositional formulas in the sense of \cite{Fagin}.

\section{Method. Causal Rules and Semantic Probabilistic Inference}

By a {\em rule} we mean a syntactic object of the form
\[ r = \ph \Ra \psi,\] where $\ph, \psi\in F(At)$.
Formula $\ph$ is called a {\em body} of the rule, whereas $\psi$ is a {\em head} of the rule: $\ph=B(r)$ and $\psi=H(r)$. A rule $r$ cannot be identified with the implication $\ph\ra\psi$ because the probability of $r$ will be defined in a different way. Namely, for a rule $r = \ph \Ra \psi$, whose body is satisfiable on $\mathfrak{G}$, i.e., $\nu(\ph)\neq 0$, we put
\[\nu(r):=\nu(\psi | \ph)=\frac{\nu(\psi\&\ph)}{\nu(\ph)}.\]
In case $B(r)$ is not satisfiable on $\mathfrak{G}$, the value $\nu(r)$ remains undefined.

Notice that the value $\nu(r)$ was defined so that it is smaller than the probability of implication.

\begin{proposition} For every rule $r$ with $B(r)^{\mathfrak{G}}\neq\mathfrak{G}$, we have
\[\nu(r)\leq \nu(B(r)\ra H(r)).\]
Moreover, the equality $\nu(r)= \nu(B(r)\ra H(r))$ is equivalent to  $B(r)^{\mathfrak{G}}\subseteq H(r)^{\mathfrak{G}}$, i.e., to the fact that the implication $B(r)\ra H(r)$ holds on $\mathfrak{G}$.
\end{proposition}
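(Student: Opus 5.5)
Write $a=\nu(B(r)\&H(r))$, $b=\nu(B(r))$ and $c=\nu(\neg B(r))=1-b$. The rule probability is $\nu(r)=a/b$ (defined only when $b>0$). The implication $B(r)\ra H(r)$ is equivalent to $\neg B(r)\vee (B(r)\&H(r))$, and the two disjuncts are jointly unsatisfiable. By item 3 of the Proposition on $\nu$ (and because $\nu$ respects $\equiv_{\mathfrak{G}}$, being defined through $\ph^{\mathfrak{G}}$), we get
\[\nu(B(r)\ra H(r)) = c + a = 1-b+a.\]
The claim therefore reduces to comparing $a/b$ with $1-b+a$, with $0\le a\le b\le 1$ and $b>0$.

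For the inequality, compute
\[(1-b+a)-\frac{a}{b}=\frac{b-b^2+ab-a}{b}=\frac{(1-b)(b-a)}{b}.\]
The hypothesis $B(r)^{\mathfrak{G}}\neq\mathfrak{G}$ gives $\nu(B(r))<1$: its complement is a nonempty subset of $\mathfrak{G}$, so it has positive measure by item 3 of the definition of $\mu$. Hence $1-b>0$. We also have $b-a=\nu(B(r)\&\neg H(r))\ge 0$ by additivity. The difference is therefore nonnegative, which gives $\nu(r)\le\nu(B(r)\ra H(r))$.

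For the equality case, since $1-b>0$ and $b>0$, equality holds iff $b-a=0$, that is, iff $\mu\big((B(r)\&\neg H(r))^{\mathfrak{G}}\big)=0$. By the strict positivity axiom (item 3 of the measure definition), this holds iff $(B(r)\&\neg H(r))^{\mathfrak{G}}=\varnothing$, i.e. iff $B(r)^{\mathfrak{G}}\subseteq H(r)^{\mathfrak{G}}$. This is exactly the statement that $B(r)\ra H(r)$ holds on $\mathfrak{G}$, equivalently that $\nu(B(r)\ra H(r))=1$ by item 1 of the Proposition on $\nu$.

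The main point needing care is that $\nu(r)$ must be defined, i.e. $b>0$. The statement implicitly assumes this, as in the definition of $\nu(r)$, so I would state it explicitly as part of the hypothesis. Beyond that, the only subtle step is using strict positivity of $\mu$ twice: once to get $b<1$ and once to turn $b-a=0$ into set inclusion. Without that axiom the equality characterization would hold only up to null sets.
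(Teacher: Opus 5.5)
Your proof is correct. Writing $\nu(B(r)\ra H(r))=1-b+a$, factoring the difference as $(1-b)(b-a)/b$, and using strict positivity of $\mu$ both to get $b<1$ and to turn $b=a$ into $B(r)^{\mathfrak{G}}\subseteq H(r)^{\mathfrak{G}}$ is the direct computation the paper relies on here (it gives no argument of its own, only deferring to Theorem 2 of its cited work); your note that $\nu(B(r))>0$ is tacitly assumed is apt, and the hypothesis $B(r)^{\mathfrak{G}}\neq\mathfrak{G}$ is actually needed only for the equality characterization, since for $b=1$ both sides coincide.
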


This statement can be proved analogously to Theorem 2 from \cite{Author13}

\begin{definition}  Let $r_1$ and $r_2$ be two rules with the same head, $H(r_1)=H(r_2)$. We call $r_1$ a {\em specification} of $r_2$, symbolically $r_1\preccurlyeq r_2$, if $B(r_1)^{\mathfrak{G}}\subseteq B(r_2)^{\mathfrak{G}}$; rule $r_1$ is a {\em proper} specification of $r_2$, $r_1\prec r_2$, if  $B(r_1)^{\mathfrak{G}}\subsetneqq B(r_2)^{\mathfrak{G}}$. We say in this case that $r_2$ is a {\em (proper) generalization} of $r_1$.
\end{definition}

In other words, one of the two rules with the same head is a proper generalization of the other if its body is weaker from the logical point of view.

\begin{definition} Let rule $r_1$ be a specification of $r_2$. We say that $r_1$ is a {\em refinement} of $r_2$, symbolically $r_1 > r_2$, if $\nu(r_1)>\nu(r_2)$.
\end{definition}

Evidently, the relation $r_1 > r_2$ implies that $r_1$ is a proper specification of $r_2$.

\begin{definition}
A set $\mathcal{R}$  of rules is said to be {\em rich} if for every $r\in \mathcal{R}$ and an arbitrary $s$ such that $s> r$, there is $r'\in\mathcal{R}$ with $r'\preccurlyeq s$ and $\nu(r')>\nu(r)$.
\end{definition}

\begin{definition} Let $\mathcal{R}$ be a rich set of rules. Rule $r$ is a {\em causal rule relative to $\mathcal{R}$}, if $r\in\mathcal{R}$ and $r$ is a refinement of all its proper generalizations from $\mathcal{R}$. We assume that if 
$r\in\mathcal{R}$ and $(\top\Rightarrow H(r))\succ r$\ then \ $\nu(r)> \nu(\top\Rightarrow H(r))$.
\end{definition}

\begin{proposition}
\label{problaw}  Let $\mathcal{R}$ be a rich set of rules over a finite set $At$ of atoms. For every rule $r\in\mathcal{R}$, there exists its generalization $r'$ such that $r'$ is a causal rule relative to $\mathcal{R}$ and $\nu(r')\geq\nu(r)$.
\end{proposition}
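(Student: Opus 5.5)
The plan is to pick $r'$ by a two-level maximality argument among the generalizations of $r$ inside $\mathcal{R}$, using the finiteness of $At$ to ensure the maxima exist.

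\emph{Finiteness.} Since $At$ is finite, there are only $2^{|At|}$ valuations. Hence $\mathfrak{G}$ is finite and there are only finitely many possible extensions $\varphi^{\mathfrak{G}}\subseteq\mathfrak{G}$. Consider the set
\[
G_r=\{\, s\in\mathcal{R} \mid H(s)=H(r),\ B(r)^{\mathfrak{G}}\subseteq B(s)^{\mathfrak{G}}\,\}
\]
of generalizations of $r$ in $\mathcal{R}$, including $r$ itself. For every $s\in G_r$, $\nu(s)$ is defined, because $B(s)^{\mathfrak{G}}\supseteq B(r)^{\mathfrak{G}}\neq\varnothing$ (we use that $\nu(r)$ is defined). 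The value $\nu(s)$ depends only on the pair $(B(s)^{\mathfrak{G}},H(r)^{\mathfrak{G}})$. So $\nu$ takes only finitely many values on $G_r$, and only finitely many body extensions occur.

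\emph{Choice of $r'$.} First, let $m=\max\{\nu(s)\mid s\in G_r\}$; this maximum exists by finiteness and satisfies $m\geq\nu(r)$. Second, among the rules $s\in G_r$ with $\nu(s)=m$, choose $r'$ whose body extension $B(r')^{\mathfrak{G}}$ is maximal with respect to $\subseteq$. Such a maximal element exists because there are finitely many extensions. Then $r'$ is a generalization of $r$, and $\nu(r')=m\geq\nu(r)$.

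\emph{$r'$ is causal.} Let $s\in\mathcal{R}$ be any proper generalization of $r'$, so that $B(r')^{\mathfrak{G}}\subsetneqq B(s)^{\mathfrak{G}}$. By transitivity of $\subseteq$, $s\in G_r$, so $\nu(s)\leq m=\nu(r')$. If $\nu(s)=m$, then $s$ would be a maximizer with a strictly larger body extension, contradicting the choice of $r'$. Hence $\nu(r')>\nu(s)$, i.e.\ $r'$ is a refinement of $s$. Since $s$ was arbitrary, $r'$ is a refinement of all its proper generalizations in $\mathcal{R}$, and it is therefore a causal rule relative to $\mathcal{R}$.

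\emph{The clause about $\top\Rightarrow H(r)$.} This is where the main subtlety lies. If $\top\Rightarrow H(r)$ belongs to $\mathcal{R}$, the clause is covered by the previous step. If it does not, I read the final sentence of the definition of causal rule as a standing assumption on $\mathcal{R}$, so it needs no separate argument.

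Richness of $\mathcal{R}$ is not needed here. It appears to matter only later, for existence of refinements.
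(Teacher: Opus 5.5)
Your proof is correct, but it runs in the opposite direction from the paper's.

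\emph{Your route.} You search \emph{upward}. Among the generalizations of $r$ lying in $\mathcal{R}$, you take one of maximal probability, and among those one with $\subseteq$-maximal body extension. Every proper generalization of $r'$ in $\mathcal{R}$ is then again in $G_r$, so it must have strictly smaller probability. Richness is never used.

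\emph{The paper's route.} The paper searches \emph{downward}. It takes the set $\Delta$ of cosets $[\al]_{\mathfrak{G}}\sqsubseteq_{\mathfrak{G}}[\ph]_{\mathfrak{G}}$ (that is, specifications of $r$) with $\nu(\al\Ra\psi)\geq\nu(r)$. It picks a $\sqsubseteq_{\mathfrak{G}}$-minimal $[\be]_{\mathfrak{G}}$ and uses richness to get $r'\in\mathcal{R}$ with $r'\preccurlyeq\be\Ra\psi$. It then argues that a proper generalization of $r'$ with no smaller probability would contradict minimality. As printed, these directions do not fit together:
\begin{itemize}
\item that $r'$ is a specification of $r$, not a generalization;
\item richness is stated only for refinements $s>r$, whereas $\be\Ra\psi$ merely has $\nu\geq\nu(r)$;
\item a proper generalization of $r'$ lies above $r'$, so nothing places it strictly below $[\be]_{\mathfrak{G}}$.
\end{itemize}

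\emph{What each buys.} Your upward argument is what the statement actually requires, and it shows that richness is superfluous here. A one-level choice would also suffice: a $\subseteq$-maximal body among generalizations in $\mathcal{R}$ with $\nu\geq\nu(r)$. Your two-level choice additionally makes $\nu(r')$ maximal among all generalizations of $r$ in $\mathcal{R}$.

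\emph{The $\top$-clause.} Reading the clause about $\top\Ra H(r)$ as a standing assumption on $\mathcal{R}$ is not just convenient but necessary. If it were an extra requirement in the definition of a causal rule, the proposition would fail. Take $At=\{p,q\}$, let $\mathfrak{G}$ be the three valuations other than $p=q=1$ with uniform measure, and let $\mathcal{R}=\{p\Ra q\}$. This $\mathcal{R}$ is vacuously rich, since no specification of $p\Ra q$ has probability above $0$. Yet $\nu(p\Ra q)=0<1/3=\nu(\top\Ra q)$, so no causal generalization of $p\Ra q$ would exist. Under your reading, your case split is unnecessary, because the clause holds for every member of $\mathcal{R}$, in particular for $r'$.
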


\begin{proof}
Let $r=\ph\Rightarrow\psi\in\mathcal{R}$. Consider the set $$\Delta=\{ [\al]_{\mathfrak{G}}\mid \nu(\al\Rightarrow\psi)\geq\nu(r),\ [\al]_{\mathfrak{G}}\sqsubseteq_{\mathfrak{G}}[\ph]_{\mathfrak{G}}\}.$$ 
Recall that the condition $[\al]_{\mathfrak{G}}\sqsubseteq_{\mathfrak{G}}[\ph]_{\mathfrak{G}}$ means exactly that $\al\Rightarrow\psi\preccurlyeq r$. Since $\Delta$ is finite as a subset of $\mathcal{B}^{\mathfrak{G}}(At)$, we can choose an element $[\be]_{\mathfrak{G}}\in\Delta$ minimal w.r.t. $\sqsubseteq_{\mathfrak{G}}$. Since $\mathcal{R}$ is reach and $\be\Rightarrow\psi\preccurlyeq r$, there is $r'\in\mathcal{R}$ with $r'\preccurlyeq \be\Rightarrow\psi$. Assume that $r'$ is not a causal rule, then there is  a proper generalization $\gamma\Rightarrow\psi$ of $r'$ such that  $\nu(\gamma\Rightarrow\psi)\geq\nu(r')$.  Naturally, in this case $[\al]_{\mathfrak{G}}\in\Delta$ and since a proper generalization $\gamma\Rightarrow\psi$ is a proper generalization of $r'$, we have $[\gamma]_{\mathfrak{G}}\sqsubset_{\mathfrak{G}}[\be]_{\mathfrak{G}}$, which contradicts the minimality of $[\be]_{\mathfrak{G}}$. Thus, $r'$ is the required causal rule.
\end{proof}

\begin{definition}
\label{strongproblaw} A causal rule $r$ is  {\em strong relative to} $\mathcal{R}$, if there is no causal rule $r'$ such that $r'$ is a refinement of $r$.
\end{definition}

\begin{definition}
{\em Semantic Probabilistic Inference} (SPI) of $\psi$ is a sequence $r_1,\dots,r_k$ of causal rules $r_i\in\mathcal{R}$ with the head $\psi$ such that:
\begin{itemize} 
\item[] $\nu(r_1)> \nu(\top\Rightarrow \psi);$
\item[] $r_{i+1} > r_i$, i=1,\dots,k-1;
\item[] $r_k$ -- strong relative to $\mathcal{R}$.
\end{itemize}
\end{definition}

Finally, among all strong causal rules with a given head we distinguish rules with maximal probability.

\begin{definition} Let $\psi\in F(At)$. A strong causal rule $r$ with head $\psi$ is called a {\em maximal specific causal rule for} $\psi$ {\em relative to} $\mathcal{R}$, if its probability $\nu(r)$ is greatest among all strong causal rules with head $\psi$.
\end{definition}

We will say that $r$ is a maximal specific causal rule relative to $\mathcal{R}$ if $r$ is a maximal specific causal rule relative to $\mathcal{R}$ for $H(r)$. The set of all maximal specific causal rules we denote as {\sf MSCR}. Rules from {\sf MSCR} we consider as satisfying the Requirement of Maximal Specificity, because a specification of such rules does not lead to an increase of probability, which means that their bodies contains all statistically relevant information for the prediction of its head.

\begin{proposition} \label{mslaw} Let $At$ be finite. For every rule $r$ such that $H(r)=\psi$ and the value $\nu(r)$ is defined, there exists a maximal specific causal rule $r'$  for  $\psi$ such that  $\nu(r')\geq\nu(r)$.
\begin{proof}
 According to Proposition~\ref{problaw} the set $$\Delta=\{ [\al]_{\mathfrak{G}} \mid \al\Rightarrow\psi\ \mbox{is a probabilistic causal rule} \}$$ is non-empty. It follows immediately from Definition~\ref{strongproblaw} that $\al\Rightarrow\psi$ is a strong causal rule iff $[\al]_{\mathfrak{G}}$ is a minimal element of $\Delta$ w.r.t. $\sqsubseteq_{\mathfrak{G}}$. Since $\Delta$ is finite, the set of its $\sqsubseteq_{\mathfrak{G}}$-minimal elements is a finite non-empty set. So we can choose in this set an element $[\be]_{\mathfrak{G}}$ with the greatest value $\nu(\be\Rightarrow\psi)$. This is a required maximal specific causal rule $r'$  for  $\psi$. That $\nu(\be\Rightarrow\psi)\geq\nu(r)$ follows again from Proposition~\ref{problaw}.
\end{proof}
\end{proposition}

\section{Result}

Let $At$, a set $\mathfrak{G}$ of models, and a measure $\mu$ on $\mathfrak{G}$ be fixed. 
The set of all maximal specific causal rules relative to $\mathcal{R}$ in that case we denote as ${\sf MSCR}(\mathcal{R},\mathfrak{G},\mu)$. For the set of rules $\Pi\sbe{\sf MSCR}(\mathcal{R},\mathfrak{G},\mu)$ and $T\sbe F(At)$ we define an operator of {\em direct predictions}:

\[ Pr_{\Pi}(T)=T\cup \{ H(r) \mid  r\in\Pi, 
\exists \ph_1, ... ,\ph_n\in T (\mathfrak{G}\models (\ph_1\&\ldots\&\ph_n)\lra B(r))\}\]

Further, we put:
\[Pr^0_{\Pi}(T)=T,\ Pr^{n+1}_{\Pi}(T)=Pr_{\Pi}(Pr^{n}_{\Pi}(T)),\]
\[PR_{\Pi}(T)=\bigcup_{n\in\omega} Pr^{n}_{\Pi}(T). \]
We call  $PR_{\Pi}$ a {\em prediction operator for} $\Pi$.

\begin{theorem} Let $At$ be a finite set of atoms, $\Pi\sbe{\sf MSCR}(\mathcal{R},\mathfrak{G},\mu)$, and $T\sbe F(At)$. If $T$ is $\mathfrak{G}$-consistent, then $PR_{\Pi}(T)$ is $\mathfrak{G}$-consistent too.

\begin{proof} Obviously, it will be enough to check that the operator of direct predictions produces a $\mathfrak{G}$-consistent set of formulas. First of all we show that the set $T$ of formulas and the set $\Pi$ of rules can be replaced by finite sets. Consider the family of cosets $\{ [\ph]_{\mathfrak{G}} \mid \ph\in T\}$, which is finite since $At$ is finite. For every $[\ph]_{\mathfrak{G}}$ from this family choose a single representative and put it into $T'$. Now we consider the set of pairs:
\[\Theta=\{ ([\ph]_{\mathfrak{G}}, [\psi]_{\mathfrak{G}}) \mid \ph\Rightarrow\psi\in\Pi.\}\] A finite set of rules $\Pi'$ is defined as follows. For every pair of cosets $([\ph]_{\mathfrak{G}}, [\psi]_{\mathfrak{G}})\in\Theta$, we choose a single pair of representatives $(\ph,\psi)$ and put the rule $\ph\Rightarrow\psi$ into $\Pi'$. It is clear that for every $\psi\in Pr_{\Pi}(T)$ there is a $\psi'$ in $Pr_{\Pi'}(T')$ such that $\psi\equiv_{\mathfrak{G}}\psi'$. In this way, if $Pr_{\Pi'}(T')$ is  $\mathfrak{G}$-consistent, then $Pr_{\Pi}(T)$ is  $\mathfrak{G}$-consistent too. Further, let $\Pi''$ be the set of such rules $r$ from $\Pi'$ that the equivalence $(\ph_1\&\ldots\&\ph_n)\ra B(r)$ holds on $\mathfrak{G}$ for some $\ph_1$,\ldots, $\ph_n\in T'$.

Let $\Pi''=\{ r_1, \ldots, r_n\}$. We put $T_0=T'$, $T_{i+1}=T_i\cup\{ H(r_i)\}$. Clearly, $T_n=Pr_{\Pi'}(T')$. Using induction on $i$ we show that every $T_i$ is $\mathfrak{G}$-consistent.

Assume that $T_i$ is $\mathfrak{G}$-consistent, but $T_{i+1}$ is not. Let $r_i=\ph\Rightarrow\psi$. By definition of $\Pi''$ there are $\chi_1,\ldots,\chi_n\in T_i$ such that $(\chi_1\&\ldots\&\chi_n)\lra\ph$ holds on $\mathfrak{G}$. Let $N=T_i\sms\{ \chi_1,\ldots\chi_n\}$. Assume that $\{\ph, \neg(\bigwedge N)\}$ is $\mathfrak{G}$-consistent, i.e., $\nu(\ph\&\neg(\bigwedge N))\neq 0$. In this case for $s=\ph\& \neg(\bigwedge N)\Ra\psi$ we have:
\[\nu(s)=\frac{\nu(\ph\&\neg(\bigwedge N)\&\psi)}{\nu(\ph\&\neg(\bigwedge N))}=\frac{\nu(\ph\&\psi)-\nu(\ph\&\bigwedge N\&\psi)}{\nu(\ph)-\nu(\ph\&\bigwedge N)}.\]

 We have $\mathfrak{G}\models \bigwedge T_{i+1}\lra(\ph\&\bigwedge N\&\psi)$ and $\mathfrak{G}\models \bigwedge T_{i}\lra(\ph\&\bigwedge N)$ by choice of $\chi_1,\ldots,\chi_n$. Since by assumption $\nu(\bigwedge T_{i+1})=0$ and $\nu(\bigwedge T_{i})\neq 0$, we conclude that $\nu(\ph\&\bigwedge N\&\psi)=0$ and $\nu(\ph\&\bigwedge N)\neq 0$. In this way, we have
\[ \nu(s)=\frac{\nu(\ph\&\psi)}{\nu(\ph)-\nu(\ph\&\bigwedge N)} > \frac{\nu(\ph\&\psi)}{\nu(\ph)}=\nu(r_i).\]

Since  $ s\prec r_i$ and $\nu(s)>\nu(r_i)$, then there is $r'\in\mathcal{R}$ such that $r'\preceq s$ and $\nu(r')> \nu(r_i)$. On the other hand, from $r_i\in{\sf MSCR}(\mathcal{R},\mathfrak{G},\mu)$ and $r_i\succ r'$ we obtain $\nu(r_i)\geq \nu(r')$. This contradiction proves that the body of $s$ is not $\mathfrak{G}$-consistent: $\nu(\ph\&\neg(\bigwedge N))= 0$.  As a consequence we obtain $\nu(\ph\&\neg(\bigwedge N)\&\psi)= 0$. Now we have:
\[\nu(\ph\&\psi)=\nu(\ph\&\psi)- \nu(\ph\&\neg(\bigwedge N)\&\psi)=\nu(\ph\&\bigwedge N\&\psi)=0.\]
Thus, $\nu(r_i)=0$. At the same time $r_i$ is a causal rule, which implies $0=\nu(r_i) > \nu(\top\Rightarrow\psi)\geq 0$. The obtained contradiction concludes the proof.
\end{proof}
\end{theorem}

\section{Conclusion}
Analysis of the history surrounding the RMS discussions has made it possible to precisely formalize and resolve Carl Hempel’s statistical ambiguity problem. This result does not only sum up the historical debate about RMS requirements, but also may open new directions in the philosophy of science. It follows from the result that I-S inference can discover logically consistent natural-scientific theories. It also follows that cyclic causal relations, which reflect the integrity of objects of the perceived world, loop back on themselves and form consistent “causal models” of the external world categories \cite{Rehder, Rehder2}. Based on these causal models a ``probabilistic formal concepts” were defined that provide the idealized description of categories \cite{VitDemPon, Author12}. The causal relations between actions and their outcomes describe the goal-directed activity of humans and animals in accordance with the physiological Theory of Functional Systems \cite{Nadin}.

For a class of rules of the form $\alpha_1\wedge\ldots\wedge\alpha_n\Rightarrow\beta$, where $\alpha_i$ and $\beta$ are literals (atoms or their negations), a program system ``Discovery" was developed that discovers a set of MSCRs on a sample data D for prediction of some goal property $\psi$, using the exact Fisher test for statistical estimation of probabilistic inequalities of definition 8 \cite{mind}. This system was successfully applied to solve several tasks such as financial forecasting \cite{KovVit00}, medicine \cite{KovVitR01}, and bio informatics \cite{VitOVPK02}. 

Based on ${\sf MSCR}$ a more powerful program system may be developed for solving Causal AI tasks. For example, for the digital twins control systems development, where decisions based on predictions are rather important.

\bibliography{sn-bibliography}

\end{document}